\setlist{itemsep=0pt}
\newtheorem{theorem}{Theorem}
\newtheorem{remark}{Remark}
\definecolor{lime}{HTML}{A6CE39}
\DeclareRobustCommand{\orcidicon}{
\hspace{-3mm}	
	\begin{tikzpicture}
	\draw[lime, fill=lime] (0,0) 
	circle [radius=0.16] 
	node[white] {{\fontfamily{qag}\selectfont \tiny ID}};
	\draw[white, fill=white] (0.0,0.0) 
	circle [radius=0.000];
	\end{tikzpicture}
	\hspace{-3mm}
}
\xdef\csname orcid\x\endcsname{\noexpand\href{https://orcid.org/\csname orcidauthor\x\endcsname}{\noexpand\orcidicon}}
\title{Misclassification Cost-sensitive Ensemble Learning:\\ A Unifying Framework} 
\author[1,2]{George Petrides\thanks{The main body of this work was finished while this author's main affiliation was VUB. Email: firstname.lastname@uib.no}\orcidA{}(\Letter)}
\author[2]{Wouter Verbeke\orcidB{}
}
\affil[1]{University of Bergen, Norway}
\affil[2]{Vrije Universiteit Brussel (VUB), Belgium}
\date{\empty}
\begin{document}
\maketitle

\begin{abstract}
Over the years, a plethora of cost-sensitive methods have been proposed for learning on data when different types of misclassification errors incur different costs. Our contribution is a unifying framework that provides a comprehensive and insightful overview on cost-sensitive ensemble methods, pinpointing their differences and similarities via a fine-grained categorization. Our framework contains natural extensions and generalisations of ideas across methods, be it AdaBoost, Bagging or Random Forest, and as a result not only yields all methods known to date but also some not previously considered.
\paragraph{Keywords}
Cost-sensitive learning, class imbalance, classification, misclassification cost
\end{abstract}
\section{Introduction}\label{sec:intro}
The task of supervised machine learning is given a set of recorded observations and their outcomes to predict the outcome of new observations. Standard classification techniques aim for the highest overall accuracy or, equivalently, for the smallest total error, and include among others support vector machines, Bayesian classifiers, logistic regression, decision tree classifiers such as CART \cite{cart} and C4.5 \cite{C45}, and  ensemble methods which build several classifiers and aggregate their predictions such as Bagging \cite{bagging}, AdaBoost \cite{AdaB} and Random Forests \cite{RF}.

Of particular interest in certain domains are \textit{binary} classifiers which deal with cases where only two classes of outcomes are considered, such as fraudulent and legitimate credit card transactions, responders and non-responders to a marketing campaign, patients with and without cancer, intrusive and authorised network access, and defaulting and repaying debtors to name a few. In most of these cases, one of the classes is a small minority and consequently traditional classifiers might classify all of its members as belonging to the majority class without any significant overall accuracy loss. The severity of this \textit{class imbalance} becomes more noticeable when failing to correctly predict a minority class member is more costly than doing so with a member of the majority class, as the case often is.

A remedy to the undesirable situation just described are classifiers which, instead of accuracy, take misclassification costs into account and are thus termed \textit{cost-sensitive}. We illustrate this idea in the credit card fraud detection framework: accepting a fraudulent transaction as legitimate incurs a cost equal to its amount. Conversely, requiring an additional security check for a transaction (such as contacting the card owner) incurs an overhead cost. The job of a cost-sensitive classifier is to find a cost-minimising balance between overhead costs and fraud costs.
\subsection{Related Work}\label{sec:RelatedWork}
An increased research interest in cost-sensitive learning that spanned more than a decade was witnessed in the mid-nineties,
including among others \cite{Knoll,rcppruning,cspruning,wtree,CSB0,metacost,AdaC,EvalMetaCost, CostTypes,foundations,DCSDM,AsymAB,wtreejour,costing,DTMC,thresholding,AdaC123}.
In recent years, a renewed interest is observed 
\cite{cous,CSDT,NeedCSAda}, 
partly attributed to practitioners starting to realise the potential of using cost-sensitive models for their businesses by considering real-world monetary costs. 

The most recent articles providing an overview of 
these methods differ in the fraction of the classifier spectrum they cover. 
Often, cost-sensitive learning is reviewed within surveys on learning on imbalanced datasets as an approach towards treating class imbalance in any domain by artificially introducing costs  \cite{surHe,surPrati,surSun,surGalar}. Employing a 
cost minimisation point of view, \cite{surLomax} reviews cost-sensitive classifiers based on decision trees. Overall, cost-sensitive boosting methods receive more attention than other methods such as weighting, altered decisions and cost-sensitive node splitting. 
%
\subsection{Our Contribution}\label{sec:contributions}
Our primary contribution in this article is a unifying framework of binary ensemble classifiers that, by design or after slight modification,  are cost-sensitive with respect to misclassification costs. It is presented in terms of combinable components that are either directly extracted from the existing literature or indirectly via natural extensions and generalisations we have identified. A notable example of such an extension are ways in which costs can influence the aggregation of the outputs of the individual models in any ensemble, as done in AdaBoost (Sect.~\ref{sec:voting}). As such, our work goes one step further than being a mere survey. 
The advantages of our approach include that 
\begin{enumerate}[label=(\alph*)]
\item by abstracting the core ideas behind each classifier, we are able to provide generic descriptions that allow for a fine-grained categorisation with respect to the way costs influence the final decision,
\item it makes the similarities and differences between methods easier to recognise (see for example Table~\ref{table:CostBoost} and the equivalence proven in Thm.~\ref{thm:mec-mtaEquiv}),
\item it clearly indicates the types of costs (constant or record-dependent) that are applicable for each method,
\item combining the framework components in all possible ways not only yields all methods known to date, but also some not previously considered (see for example Tables~\ref{table:modeloverview} and \ref{table:algoverview}),
\item  the framework components are generic enough to be instantiated with different classifiers, including Random Forests (see for example Table~\ref{table:algoverview}), and 
\item it highlights research directions that can lead to new cost-sensitive methods (see Sect.~\ref{sec:FutureWork}).
\end{enumerate}
\subsection{Outline}
We give a brief introduction to decision tree classifiers, ensemble methods and cost-sensitive learning in Sect.~\ref{sec:preliminaries}, before presenting our framework of cost-sensitive components in Sect.~\ref{sec:CSC}. In 
Sect.~\ref{sec:experiments} we discuss the road towards the state of the art, and we end the paper with our conclusions and directions for future work in Sect.~\ref{sec:conclusions}.
\section{Preliminaries}\label{sec:preliminaries}
Most of this section's material is provided with the intention of making the article as self contained as possible. We begin with the basics of decision tree classifiers which play a central role in this work. Readers familiar with these and ensemble methods can proceed to Sect.~\ref{sec:CSL} for an introduction to cost-sensitive learning.

A dataset is a collection of records which consist of a number of characteristics, often referred to as \textit{attributes} or \textit{features}. A record's outcome or \textit{class} is what is of importance and needs to be predicted.
Classifiers are trained using a set of records together with their known class in order to be able to predict the class of other records for which it is unknown.

In this work our interest lies in the binary class case where there are only two possibilities for the class.
In binary imbalanced datasets, it is customary to call records within the minority class  \textit{positive} and within the majority class \textit{negative}. Throughout this article, the class of positive records will be denoted by $1$ and that of negative ones by $0$. 

Distinction is made between the different predictions a classifier makes. \textit{True Positive} (TP) and \textit{False Negative} (FN) denote a positive record correctly classified and misclassified respectively, and \textit{True Negative} (TN) and \textit{False Positive} (FP) are the equivalents for a negative record.
\subsection{Decision Tree Classifiers}
Decision Tree classifiers are greedy algorithms that try to partition 
datasets according to their records' outcomes 
through a series of successive splits. For each split, the attribute that partitions the records the best according to some metric is chosen, and splitting ends after each partition contains records of only one class, or when further splits do not improve the situation. 

Starting from the initial set of all records, or the tree's \textit{root node}, splits create \textit{branches} in the tree which are labelled by the attribute used for splitting. Split sets are known as \textit{parent nodes}, sets obtained after a split are called \textit{children nodes}, and sets that are no longer split are called \textit{leaf nodes}. 

Usually, the next step after tree growing is \textit{pruning}, done by removing nodes which do not improve accuracy. Pruning is a process starting from the bottom of the tree and going up and serves the purpose of reducing \textit{over-fitting}, that is the effect of the tree's quality of predictions not generalising beyond the dataset used for training.  

In the final tree, each leaf node is assigned the class with the highest frequency among its records, and every record reaching the node will be predicted as having that class.

For certain parts of a decision tree algorithm, such as the node splitting step, it is necessary to know the probabilities that a record reaching a node $t$ is positive ($P_{t_+}$) and negative ($P_{t_-}$). Since $P_{t_-}=1-P_{t_+}$, it in fact suffices to know one of the two. Let $N_t$, $N_t^+$ and $N_t^-$ respectively denote the sets of all, the positive, and the negative records at node $t$ (when no subscript is specified we will be referring to the root node of the tree, or the set of all records). Then, 
$$P_{t_+}=|N_t^+| / |N_t| \enspace.$$
As probabilities based on frequency counts can be unreliable due to high bias and variance, they can be calibrated using methods like \textit{Laplace Smoothing} as suggested in \cite{rcppruning} 
 ($P_{t_+}=\left (|N_t^+|+1 \right ) / \left (|N_t|+2 \right )$), \textit{m-estimation} \cite{m-est,DCSDM,Elkancalib} ($P_{t_+}=\left( |N_t^+|+b\cdot m \right) /\left( |N_t|+m \right)$, where $b=|N^+| / |N|$ and $b\cdot m \approx 10$),  and \textit{Curtailment} \cite{DCSDM,Elkancalib} (each node with less than $m$ records, $m$ as before, gets assigned the probability of its closest ancestor with at least $m$ records) and combinations of the latter with any of the former two.
 
Examples of decision tree classifiers include the widely used CART \cite{cart} and C4.5 \cite{C45}, briefly described below.
\subsubsection{CART} CART (\textit{Classification and Regression Trees}, \cite{cart}) uses the Gini index as a splitting measure during tree growing. More specifically, the attribute chosen for splitting is the one that maximises the following value, known as \textit{gain}:  
\begin{equation}\label{eq:cartgain}
1-P_{t_+}^2-P_{t_-}^2 - \displaystyle \sum_{i=1}^k\frac{|N_{t_i}|}{|N_t|} \left (1-P_{t_{i+}}^2-P_{t_{i-}}^2 \right )\enspace,
\end{equation}
 where $t_1$  to $t_k$ are the children nodes of tree node $t$.

The pruning method used by CART is \textit{cost complexity pruning}. Given a tree $T$ and $\alpha \in \mathbb{R}$ the aim is to find the subtree of $T$ with the smallest error approximation, which is its error on the training data (the number of wrong predictions over the number of correct ones) plus $\alpha$  times the number of its leaf nodes. Starting from the tree to be pruned and $\alpha=0$, a finite sequence of subtrees and increasing $\alpha$s is obtained in this way. The tree chosen as the pruned tree is the one with the smallest error approximation on a separate validation set (a set not used for anything else). 

By design, CART can take record weights $w$ as input during training and use them to modify the probabilities used in calculating the gain (\ref{eq:cartgain}) as
$$P_{w_{t_+}}=W_t^+ / W_t\enspace,$$ where  $W_t^+$ and $W_t$ respectively denote the sum of weights of all positive and all records at node $t$. Moreover,  minimum weight is considered instead of minimum error for pruning, and each leaf node is assigned the class with the largest total weight among its records.
\begin{remark} It is not clear if and how weighted probabilities can be calibrated. \end{remark}
\subsubsection{C4.5} 
The splitting measure of C4.5 \cite{C45} is an extension of Entropy which is a normalised version known as \textit{gain ratio}:
$$
\displaystyle \frac{P_{t_+}\log_2{P_{t_+}} + P_{t_-} \log_2{P_{t_-}} - \sum_{i=1}^k\frac{|N_{t_i}|}{|N_t|} \left( P_{t_{i+}}\log_2{P_{t_{i+}}} + P_{t_{i-}} \log_2{P_{t_{i-}}} \right)
}{\sum_{i=1}^k\frac{|N_{t_i}|}{|N_t|}\log_2{\frac{|N_{t_i}|}{|N_t|}}} \enspace,
$$
where $t_1$  to $t_k$ are the children nodes of tree node $t$.
 C4.5 employs \textit{reduced error pruning}.

 Ting \cite{wtree,wtreejour} showed how to implement the weighted CART design in C4.5.
\subsection{Ensemble Methods}
In ensemble methods, several models are trained and their outcomes combined to give the final outcome, usually as a simple or weighted majority vote, the difference lying on whether each model's vote weighs the same 
 (as in Bagging and Random Forests) or may weigh differently (as in AdaBoost). 
Probabilities are usually combined by taking their average.

Some of the most important ensemble methods are briefly described below.
\subsubsection{Bagging}\label{sec:BG}
The idea of Bagging  \cite{bagging} is to build several models (originally CART models, though in principle there is no restriction) on samples of the data. If the sampled sets are of equal size as the original data, they are called \textit{bootstraps}.
\begin{enumerate}
\item Sample with replacement a number of uniformly random and equally sized sets from the training set.
\item\label{bag:model} For each sampled set, build a model producing outcomes or probabilities.
\item A record's final outcome (respectively probability $P_+$) is the the majority vote on its outcome (respectively the average of its probabilities) from all models.
\end{enumerate}
\subsubsection{Random Decision Forests}\label{sec:RS}
As originally defined in \cite{rdf,randomsub}, Random Decision Forests differs from Bagging in that it samples subsets of the attribute set instead of the data to build decision trees. Here we will abuse terminology slightly and consider Random Decision Forests as a special case of Bagging that builds \textit{Random Feature Trees}, a name we give to decision trees that are grown on a random subset of the attribute set.
\subsubsection{Boosting} \label{sec:AdaB}
Boosting refers to enhancing the predictive power of a "weak" classifier by rerunning it several times, each time focusing more on misclassified records.  

AdaBoost \cite{AdaB} is the most notable example of Boosting in which the focus on each record is in terms of weights: misclassified records after a round get increased weights and correctly classified ones get decreased weights.
\begin{enumerate}
\item \label{item:abwts} Assign weight $w=1$ to each record in the training set.
\item\label{item:normalise} Normalise each record's weight by dividing it by the sum of the weights of all records. 
\item \label{item:abmodelbuilt}Build a model using the weighted records and obtain each record's outcome $h \in \{0,1\}$
  and the model's total error $\epsilon$ as the sum of weights of all misclassified records.
\item \label{item:abprobrep}Update each record's weight as $w'=w\cdot e^{-\alpha y_* h_*}$, where $\alpha=\frac{1}{2}\ln\left( \frac{1-\epsilon}{\epsilon}\right)$,
and $h_*$ and $y_*$  are \textit{h} and \textit{y}, the record's true class, mapped from $\{0,1\}$ to $\{-1,1\}$.
\item\label{item:abrepeat} Repeat steps \ref{item:normalise} to \ref{item:abprobrep} as required.
\item\label{item:adaboostfinal} A record's final outcome is the weighted majority vote on its outcome from all models,  the weights being the $\alpha$s.
\end{enumerate}
A generalised version of AdaBoost with different $\alpha$ was proposed in \cite{GenAB}. In \cite{AdaProb} methods were investigated for obtaining reliable probabilities from AdaBoost, something that by default it is incapable of doing, through calibrating $S$, the normalised sum of weighted model votes. These are 
\textit{Logistic Correction} \cite{LogCorr} ($P_{lc}=1/(e^{-2\left(2\cdot S-1\right)}+1)$, where the name was coined in \cite{AdaProb}), \textit{Platt Scaling} \cite{Platt,IsoReg2} ($P_{ps}=1/(e^{A\cdot S+B}+1)$, where A and B maximise $P_{ps}$ on a validation set with classes mapped from $\{0,1\}$ to $\{(|N^+|+1)/(|N^+|+2),1/(|N^-|+2)\}$) and \textit{Isotonic Regression} \cite{IsoReg1,IsoReg2} (essentially an application of the PAV algorithm \cite{PAV}:
 (1) sort training records according to their sum $S$, (2) initialise each record's probability as 0 if negative and 1 if positive, (3) whenever a record has higher probability than its successor, replace the probability of both by their average and consider them as one record thus forming intervals, and (4) a record's probability is the one of the interval its sum $S$ falls in).
\subsubsection{Random Forests}\label{sec:RF}
Random Forests \cite{RF} is in fact Bagging confined to tree classifiers with \textit{Random Input Selection}, which at each splitting step choose the best attribute out of a small randomly chosen subset of all attributes,  and are not pruned.
\subsection{Cost-Sensitive Learning}\label{sec:CSL}
Despite the absence of a formal definition, the informal consensus is that \textit{cost-sensitive (CS) learning} refers to aiming at minimising costs related to the dataset instead of error, typically via these costs influencing the classification process in some way.
In this work we only consider misclassification costs, though other types exist, such as the cost of attribute acquisition and obtaining attribute values that are missing.

Traditionally, different costs (or benefits) assigned to each type of classification are given in the form of a \textit{Cost Matrix}: 
\begin{center}
$CM=\left [
\begin{array}{cc}
C_{TP} \hspace{2mm} & C_{FN} \vspace{2mm}\\
C_{FP} \hspace{2mm} & C_{TN}\\
\end{array}
\right ]$
\end{center}
In the sequel, we only consider misclassification costs that are higher than costs of correct classification, and by letting $C_{TN}'=C_{TP}'=0$, $C_{FP}'=C_{FP}-C_{TN}$  and $C_{FN}'=C_{FN}-C_{TP}$, we can reduce our attention to only misclassification costs, even when the other costs are non-zero \cite{foundations}. 
 
Costs can be either constant for all records of a class, often called \textit{class-dependent}, or vary per record which we will call \textit{record-dependent}. 
For instance, in credit card fraud detection, false positive costs $C_{FP}$ are equal to overhead costs and can be  the same for all transactions, 
whereas false negative costs $C_{FN}^i$ depend on the individual transactions $i$ and are equal to the corresponding amount. 
\subsubsection{CS Decisions}\label{sec:csdecisions}
 A cost-insensitive classifier would label a record as positive if $P_+ > P_-$ or equivalently if $P_+ > 0.5$.
 As explained by Elkan  \cite{foundations}, this decision can be made cost-sensitive by using the minimum expected cost (MEC) criterion, that is by labelling a record as positive if 
 $ C_{FP}\cdot P_-  <  C_{FN} \cdot P_+$,
or equivalently if $P_+ > T_{cs}$, where $T_{cs}$ is the cost-sensitive threshold 
\begin{equation} \label{eq:SCST}
 T_{cs}=\frac{C_{FP}}{C_{FP}+C_{FN}} \enspace.
\end{equation}
Note that $T_{cs}=0.5$ corresponds to the case of equal misclassification costs, $C_{FN}>C_{FP}$ implies $T_{cs} <0.5$ and  $C_{FN}<C_{FP}$ implies $T_{cs} > 0.5$.  
\begin{remark}
The case of record-dependent costs can be treated by considering a distinct threshold $T^i_{cs}$ per record \textit{i}, as first observed in \cite{Elkancalib}.
\end{remark}
\textit{Thresholding} \cite{thresholding}, instead of using the theoretical threshold (\ref{eq:SCST}), looks for the best threshold $T_{thr}$ among all probabilities obtained from the training set by computing the total costs for each on a validation set and choosing the one with the lowest. 
\subsubsection{CS Data Sampling}\label{sec:cssampling}
To induce decision making using the threshold $T_{cs}$ of (\ref{eq:SCST}) instead of 0.5 when the cost ratio is constant, we can \textit{under-sample} the negative training records by only sampling $|N^-|\cdot
 \frac{C_{FP}}{C_{FN}}$ 
out of $|N^-|$ \cite{foundations}. Equivalently, we can \textit{over-sample} the positive training records by duplicating existing ones or by synthesising new records to reach a total of $|N^+|\cdot \frac{C_{FN}}{C_{FP}}$
instead of $|N^+|$. Sampling and duplicating can either be random or targeted according to some rule. Naturally, any combination of these techniques that yields a positive-negative ratio equal to 
\begin{equation}\label{eq:csratio}
r_{cs}=\frac{|N^+|}{|N^-|} \cdot \frac{C_{FN}}{C_{FP}} 
\end{equation} is possible, and we shall call it \textit{hybrid-sampling}.
\begin{remark}If  $\frac{C_{FN}}{C_{FP}} > \frac{|N^-|}{|N^+|}$ then sampling turns the positive class into the majority. Under-sampling reduces the size of training data and consequently model training time at the cost of losing potentially useful data. On the other hand, over-sampling makes use of all data but leads to increased training times, and record duplication entails the risk of over-fitting.
\end{remark}
%
One method for synthesising new records, thus avoiding the risk of over-fitting is the \textit{Synthetic Minority Oversampling Technique (SMOTE)} \cite{SMOTE}, which over-samples positive records by creating new ones that are nearest neighbours (roughly speaking, that have the closest similarity attribute-wise) to existing ones:
\begin{enumerate}
\item Choose a positive record and find some (say \textit{k}) of its nearest neighbours.
\item For each nearest neighbour, find its per attribute distance $d_a$ with the positive record. 
\item Create a new positive record with attributes those of the positive record minus a random fraction of $d_a$. 
\item Repeat as required, keeping \textit{k} fixed.
\end{enumerate}
%
An alternative to sampling or synthesising records to reach the ratio $r_{cs}$ in (\ref{eq:csratio}) is \textit{Cost-Proportionate Rejection (CPR) Sampling} \cite{costing}, which is also applicable when costs are record-dependent, and where a sampled record is accepted with probability proportional to its cost:
\begin{enumerate}
\item Sample with replacement a uniformly random set from  the training set.
\item Create a new training set that includes each of the sampled  set's elements with probability $C_{FN}/\max\{C_{FN},C_{FP}\} $ if positive or  $C_{FP}/\max\{C_{FN},C_{FP}\}$ if negative.
\end{enumerate} 
\subsubsection{CS Record Weights}
Ting \cite{wtree} was the first to explicitly incorporate costs in the weights $w_+$ of the positive and $w_-$ of the negative classes used in weighted classifiers, followed by normalisation:
\begin{equation}\label{eq:posnegweight} 
w_+= C_{FN} \mbox{ \hspace{2mm} and \hspace{2mm} }  w_-= C_{FP} \enspace. 
\end{equation}
\begin{remark}
We observe that record-dependent costs can be easily taken into account by replacing $C_{FN}$ by $C^i_{FN}$ and $C_{FP}$ by $C^i_{FP}$. Clearly, equal costs yield equal weights.
\end{remark}
\section{Cost-Sensitive Ensemble Methods}\label{sec:CSC}
\begin{figure}[!t]
\centering
\begin{tabular}{ccc}
                                      \begin{tabular}{|c|}
                                      \hline
                                      \textbf{Pre-Training}\\
                                      \hline
                                      \hline                                        
                                      	\begin{tabular}{cc}                                                                       
                                      
                                      	\begin{tabular}{c}
                                      	\underline{\textit{Sampling based}}\\ 
                                      	CS-SampleEnsemble\\
                                      	CS-preSampleEnsemble\\
                                      	CS-SampleBoost\\
                                      	\end{tabular}

                                       	&

                                      	\begin{tabular}{c}
                                      	\underline{\textit{Weights based}}\\
                                      	Naive CS AdaBoost\\
                                      	CS-WeightedEnsemble\\                                   
                                      	\end{tabular}                                                      
                                      \end{tabular}\\                                                                                                                                                     
                                      \hline
                                      \end{tabular}                                                                                                                                                       
                                      
&
        
                                    \begin{tabular}{|c|}
                                    \noalign{\smallskip}\hline
                                    \textbf{During Training}\\
                                    \hline                                    
                                    \hline                                      
                                    CS base Ensemble\\                                                                        
                                    CS Variants of AdaBoost\\
                                    \\
                                    \hline
                                    \end{tabular}

&
                                    \begin{tabular}{|c|}
                                    \noalign{\smallskip}\hline                                    
                                    \textbf{Post-Training}\\
                                    \hline
                                    \hline                                   
                                    DMECC\\
                                    MetaCost\\
                                    CS Ensemble Voting\\
                                    \hline
                                    \end{tabular}\\
\end{tabular}
\caption{The main categorisation of cost-sensitive ensemble methods with respect to the point that costs influence the classification process. For a non-exhaustive list of pre- and during-training methods see Table~\ref{table:algoverview}, and for post-training methods see Table~\ref{table:modeloverview}}
\label{figure:CSCategories}
\end{figure}
Cost-sensitive ensemble methods can be divided into three main categories, depending on when costs influence the classification process:   
before training at the data level (Sect.~\ref{ssec:Pre}), 
during training at the algorithm level (Sect.~\ref{ssec:During}), 
and after training at the decision level (Sect.~\ref{ssec:Post}).
Figure~\ref{figure:CSCategories} provides a summary. Naturally, combinations of these are possible and yield what we shall call \textit{hybrid} methods.

The descriptions we provide are brief and sometimes slightly modified from the original ones in order to unify and generalise them, to incorporate costs when they are not explicitly mentioned, and, where possible, to include record-dependent costs when the original descriptions were given with only constant costs in mind. They are also base-classifier independent, meaning that apart from decision trees that we focus on in this paper, other classifiers such as logistic regression and support vector machines can be used as well.
\subsection{Pre-Training Methods}\label{ssec:Pre}
Pre-training methods employ either cost-sensitive data sampling or record weights, and as a result models need to be retrained in case the costs change. 
\subsubsection{Sampling based}\label{sssec:sampling-based}
\paragraph*{\textbf{CS-SampleEnsemble}}
By this we describe the class of ensembles that use some cost-sensitive sampling method to modify the training set before training each base classifier. This is a generalisation of the concept used in Costing  \cite{costing} where subsets of the training set are obtained by CPR sampling. The other examples found in the literature are  Balanced Random Forest \cite{CSRF}
which uses equally sized sets (originally meant to be balanced and thus cost-insensitive) to build Random Input Selection Tree models (Sect.~\ref{sec:RF}), 
EasyEnsemble \cite{easyens}, an ensemble of ensembles where
under-sampling is used to obtain a number of equally sized sets and build AdaBoost models, and SMOTEbagging and UnderBagging \cite{SMOTEbag}  which respectively use SMOTE and random undersampling. It can be seen as Bagging (Sect.~\ref{sec:BG}) with modified sampling step:
\begin{enumerate}
\item Using a cost-sensitive sampling method, sample a number of sets from the training set.
\end{enumerate} 
\begin{remark}
Although the original definition of Costing did not include models that also produce probabilities, we could not find any reason to exclude them. Costing reduces to Bagging when costs are equal, as CPR-sampling first randomly samples  the data with replacement. Other sampling methods however can sample with replacement at most one of the classes.
\end{remark}
\paragraph*{\textbf{CS-preSampleEnsemble}}
We propose this as the class of ensembles that use some cost-sensitive sampling method to first modify the training set before sampling subsets from it in the manner of Bagging. 
It can be viewed as Bagging (Sect.~\ref{sec:BG}) with the following different steps:
\begin{enumerate}
\item[0.] Modify the training set by means of a cost-sensitive sampling method.
\item Sample with replacement a number of uniformly random and equally sized sets from the modified training set.
\end{enumerate}
\begin{remark} Using CS-undersampling has the disadvantage of producing modified sets of a rather small size. Also, when $C_{FP}=C_{FN}$, CS-preSampleEnsemble reduces to Bagging.  \end{remark}
\paragraph*{\textbf{CS-SampleBoost}}
By this we describe the class of AdaBoost variants that modify the weight of each record by using some sampling method on the training set. The examples found in the literature are SMOTEBoost \cite{SMOTEB} and RUSBoost \cite{RUSB} which respectively use SMOTE and random undersampling. The steps different to AdaBoost (Sect.~\ref{sec:AdaB}) are:
\begin{enumerate}
\item[2a.]  Modify the training set by means of a cost-sensitive sampling method.
\item[2b.]  Normalise each record's weight in the modified set by dividing it by the sum of weights of all records in it.
\item[3.]  Build a model using the modified set of weighted records and obtain for the initial set each record's outcome $h \in \{0,1\}$ and the model's total error $\epsilon$ as the sum of weights of all misclassified records.
\item[5.] Repeat steps~2a to \ref{item:abprobrep} as required.
\end{enumerate}
\begin{remark}When $C_{FP}=C_{FN}$, CS-SampleBoost reduces to AdaBoost.\end{remark}
\subsubsection{Weights based}\label{sssec:weighted}
\paragraph*{\textbf{Naive CS AdaBoost}} Mentioned in \cite{AsymAB,costing} and \cite{CSAB}, its only difference to AdaBoost are the cost-dependent initial weights in Step~\ref{item:abwts}.
\begin{enumerate}
\item Assign to each positive and negative record in the training set weights as in (\ref{eq:posnegweight})
\footnote{In \cite{AsymAB} the costs are actually $w_+=\sqrt{C_{FN}/C_{FP} }$ and $w_-=\sqrt{C_{FP}/C_{FN}}$.}. 
\end{enumerate} 
\begin{remark}When $C_{FP}=C_{FN}$,  Naive CS AdaBoost reduces to AdaBoost.\end{remark}
\paragraph*{\textbf{CS-WeightedEnsemble}}
By this we describe the class of special cases of Bagging where the models built are weighted and the weights initialised as in (\ref{eq:posnegweight}).
It is a generalisation of Weighted Random Forest \cite{CSRF}, a variant of Random Forests that builds weighted Random Input Selection Tree models (see Sect.~\ref{sec:RF}). 
Weighted CART and C4.5 can also be used.
We only describe the steps that are different to Bagging (Sect.~\ref{sec:BG}):
\begin{enumerate}
\item[0.] Assign to each positive and negative record in the training set weights as in (\ref{eq:posnegweight}).
\item[2.] For each sampled set, normalise the weights and build a weighted model.
\item[3.] A record's final outcome is the weighted majority vote on its outcome from all models, the weights being the average record weights at the tree nodes reached by the record.
\end{enumerate}
\subsection{During-Training Methods}\label{ssec:During}
In during-training methods, costs directly influence the way base classifiers are built, which therefore have to be rebuilt when costs change. 
\subsubsection{CS base Ensemble}\label{sssec:csdt}
Cost-insensitive ensemble methods such as Bagging can be made cost-sensitive by employing base classifiers whose decisions based on maximising accuracy are replaced by decisions based on minimising misclassification costs. Restricting our attention in this paper to binary decision trees, the possibilities are cost-sensitive node splitting and/or tree pruning.
\paragraph*{\textbf{CS Node Splitting}} By replacing the impurity measure (such as Entropy or the Gini index) by a cost measure, node splitting in a decision tree is made cost-sensitive. 

An example is Decision Trees with Minimal Costs \cite{DTMC} that do cost-minimising splitting and labelling without pruning. 
In detail, during tree-growing, a node $t$ is labelled according to $P_{t_+}>T_{cs}$. 
The costs $C_t$ at this node are $\sum_{i \in N_{t}^+} C_{FN}^i$ if the node is labelled as negative and $\sum_{i \in N_{t}^-} C_{FP}^i$ otherwise. The attribute selected for node-splitting is the one that instead of maximising the gain value maximises $C_t-\sum_{i=1}^k C_{t_i},$
where $C_{t_i}$ to $C_{t_k}$ are the costs at the children nodes of node $t$, calculated  the same way as $C_t$.
\paragraph*{\textbf{CS Pruning}}{ By replacing the accuracy measure  
 by a cost measure, tree pruning becomes cost-sensitive. Examples include Reduced Cost Pruning \cite{rcppruning} and Cost-Sensitive Pruning \cite{cspruning} which respectively modify the reduced error pruning of C4.5 and the cost-complexity pruning of CART to calculate costs instead of errors, and \cite{Knoll} where both are done.}
\bigskip

A hybrid example are Cost Sensitive Decision Trees \cite{CSDT} that do the same cost-minimising splitting, labelling and pruning mentioned above, with emphasis on record-dependent costs. 
\begin{sidewaystable*}
\caption{Details of the CS-variants of AdaBoost. As in AdaBoost (Sect.~\ref{sec:AdaB}), the model's total error $\epsilon$ is the sum of weights of all misclassified records, 
$h_*$ and $y_*$  are the record's outcome \textit{h} and true class \textit{y} mapped from $\{0,1\}$ to $\{-1,1\}$, and weights are normalised after initialisation and updating. $C_+$ and $C_-$ are $C_{FN}$ and $C_{FP}$ scaled within $\left(0,1\right]$.}
\label{table:CostBoost}
\centering
\setlength\tabcolsep{.7mm}
\begin{tabular}{ccccccc}
\noalign{\bigskip}
\hline\noalign{\smallskip}
Method&\begin{tabular}{c}Weight\\ Initialisation\end{tabular}&Weight Update&$\alpha$&\begin{tabular}{c}Vote's\\weight\end{tabular}&Other Details&\begin{tabular}{c}Reduction \\to AdaBoost\end{tabular}\\
\noalign{\smallskip}\hline\noalign{\smallskip}
UBoost&
\begin{tabular}{c}$w_+= C_{FN}$ \\$w_-= C_{FP}$\end{tabular}&
$w'=w\cdot e^{-\alpha y_* h_*}$&
$\frac{1}{2}\ln\left( \frac{1-\epsilon}{\epsilon}\right)$&
\begin{tabular}{c}$\alpha \left ( W_+ C_{FN} \right .$\\
$\left . - W_- C_{FP} \right)$
\end{tabular}
&
\begin{tabular}{c}
$W_+$ and $W_-$ are the total positive\\
and negative weights at the tree\\ node reached by the record\\
\end{tabular}
&$\times$\\
\noalign{\smallskip}
AdaCost&
\begin{tabular}{c}$w_+= C_{FN}$ \\$w_-= C_{FP}$\end{tabular}&
$w'=w\cdot e^{-\alpha y_* h_*\beta}$&
$\frac{1}{2}\ln\left( \frac{1-r}{1+r}\right)$&
$\alpha$
&\begin{tabular}{c}
record cost adjustment function $\beta$:\\
$\beta_{TP}=(1-C_+)/2, \  \beta_{TN}=(1-C_-)/2,$\\
$\beta_{FN}=(1+C_+)/2, \ \beta_{FP}=(1+C_-)/2,$\\
$r= \sum w \cdot y_* \cdot h_* \cdot \beta$
\end{tabular}
&$\times$
\\
\noalign{\smallskip}
AdaUBoost&
\begin{tabular}{c}$w_+= C_{FN}$ \\$w_-= C_{FP}$\end{tabular}&
$w'=w\cdot e^{-\alpha y_* h_* \left ( \frac{C_{FN}}{C_{FP}} \right )^{y}}$&
$\frac{1}{2}\ln\left( \frac{1-\epsilon}{\epsilon}\right)$&
$\alpha$
&
&if $C_{FN}=C_{FP}$
\\\noalign{\smallskip}
AsymAB&
\begin{tabular}{l}
$w_+=\sqrt[2m]{ \frac{C_{FN}} { C_{FP} } }\cdot \frac{1}{|N|}$\\
$w_-=\sqrt[2m]{\frac{C_{FP}}{C_{FN}}} \cdot \frac{1}{|N|}$
\end{tabular}&
\begin{tabular}{l}
$w_+'=\sqrt[2m]{ \frac{C_{FN}} { C_{FP} } } \cdot w_+\cdot e^{-\alpha h_*}$\\
$w_-'=\sqrt[2m]{\frac{C_{FP}}{C_{FN}}} \cdot w_-\cdot e^{\alpha h_*}$
\end{tabular}&
$\frac{1}{2}\ln\left( \frac{1-\epsilon}{\epsilon}\right)$&
$\alpha$
&\begin{tabular}{c}$m$: $\#$times steps~\ref{item:normalise} to \ref{item:abprobrep}\\ are repeated in step~\ref{item:abrepeat}
\end{tabular}
&if $C_{FN}=C_{FP}$
\\\noalign{\smallskip}
CSB\textit{j}&
\begin{tabular}{c}$w_+= C_{FN}$ \\$w_-= C_{FP}$\end{tabular}&
\begin{tabular}{l}
TP  $\&$ TN:  $w'=w\cdot e^{-\alpha_j}$,\\
FN: $w_+'=C_{FN} \cdot w_+\cdot e^{\alpha_j}$,\\
FP:  $w_-'=C_{FP}\cdot w_-\cdot e^{\alpha_j}$ 
\end{tabular} &
$\frac{1}{2}\ln\left( \frac{1-\epsilon}{\epsilon}\right)$ 
&  
$\alpha \cdot C_h$ 
&
\begin{tabular}{c}
$\alpha_0=0, \alpha_1=1, \alpha_2=\alpha$,\\
$C_h \in \left\{ C_{FN} , C_{FP} \right\}$ is the misclassification\\
cost associated with the record's outcome \textit{h}
\end{tabular}
&\begin{tabular}{c} if $j=2$ and \\ $C_{FN}=1=C_{FP}$ \end{tabular}
\\\noalign{\smallskip}
AdaC1&$w=1$&
\begin{tabular}{l}$w_+'=w_+\cdot e^{-\alpha h_*C_{+}}$\\
                  $w_-'=w_-\cdot e^{\alpha h_*C_{-}}$
\end{tabular}&
$\frac{1}{2}\ln\left( \frac{1+r_t-r_f}{1-r_t+r_f}\right)$&
$\alpha$
&$r_t=\displaystyle \sum_{y=h} w_+ \cdot C_{+} + \displaystyle \sum_{y=h} w_- \cdot  C_{-}\enspace,$
&if $C_{FN}=1=C_{FP}$\\ \noalign{\smallskip}
AdaC2&$w=1$&
\begin{tabular}{l}
$w_+'=C_{FN} \cdot w_+\cdot e^{-\alpha h_*}$\\
$w_-'=C_{FP}\cdot w_-\cdot e^{\alpha h_*}$
\end{tabular}&
$\frac{1}{2}\ln\left( \frac{r_t}{r_f}\right)$&
$\alpha$
&$r_f=\displaystyle \sum_{y\ne h} w_+ \cdot C_{+} + \displaystyle \sum_{y\ne h} w_- \cdot C_{-}\enspace,$
&if $C_{FN}=C_{FP}$\\ \noalign{\smallskip}
AdaC3&$w=1$&
\begin{tabular}{l}$w_+'=C_{+} \cdot w_+\cdot e^{-\alpha h_*C_{+}}$\\
$w_-'=C_{-}\cdot w_-\cdot e^{\alpha h_*C_{-}}$\end{tabular}&
$\frac{1}{2}\ln\left( \frac{r_t+r_f+r_{2t}-r_{2f}}{r_t+r_f-r_{2t}+r_{2f}}\right)$&
$\alpha$
&\begin{tabular}{l}
$r_{2t}=\displaystyle \sum_{y=h} w_+ \cdot C_{+}^2 + \displaystyle \sum_{y=h} w_- \cdot C^2_{-}\enspace,$\\
$r_{2f}=\displaystyle \sum_{y\ne h} w_+ \cdot C^2_{+} + \displaystyle \sum_{y\ne h} w_- \cdot C^2_{-}$ \end{tabular}
&if $C_{FN}=1=C_{FP}$ \\ \noalign{\smallskip}
CSAB&
\begin{tabular}{c}$w_+=\frac{1}{|N^+|}$\\$w_-=\frac{1}{|N^-|}$\end{tabular}&
\begin{tabular}{l}
$w_+'=w_+\cdot e^{-\alpha h_* C_{FN}}$\\
$w_-'=w_-\cdot e^{\alpha h_*C_{FP}}$ 
\end{tabular}&
\begin{tabular}{c}
$2b\cdot C_{FN} \cdot \cosh(C_{FN}\alpha)+$\\
$2d \cdot C_{FP}  \cdot \cosh(C_{FP}\alpha)$\\ 
$=C_{FN}\cdot e^{-C_{FN}\alpha} \sum w_+ +$\\ 
$C_{FP}\cdot e^{-C_{FP}\alpha} \sum w_-$
\end{tabular}&
$\alpha$
&\begin{tabular}{c}$b=\displaystyle \sum_{h=0} w_+$, $d=\displaystyle \sum_{h=1} w_-$\end{tabular}
&\begin{tabular}{c}if $C_{FN}=1=C_{FP}$\\ and $|N^+|=|N^-|$\end{tabular}\\
\noalign{\smallskip}\hline
\end{tabular}
\end{sidewaystable*} 
\subsubsection{CS Variants of AdaBoost}\label{ssec:AdaBoostv}
CS variants of AdaBoost typically use the misclassification costs to update the weights of misclassified records differently per class. 
They include UBoost\footnote{Its only difference to its predecessor, \textit{Boosting} \cite{CSB0}, are the cost-based initial weights.} \cite{UBoost}, AdaCost \cite{AdaC}, AdaUBoost \cite{AdaUB}, Asymmetric AdaBoost (AssymAB, \cite{AsymAB},  CSB0 \cite{CSB,EvalMetaCost}, CSB1 and CSB2 \cite{CSB}, AdaC1, AdaC2 and AdaC3 \cite{AdaC123}, and Cost-Sensitive AdaBoost (CSAB, \cite{CSAB}). Their steps different to AdaBoost are:
\begin{enumerate}
\item[1.] Assign to each record in the training set weight according to Table~\ref{table:CostBoost}.
\item[4.] Update each record's weight according to Table~\ref{table:CostBoost}.
\item[6.] A record's final outcome is the weighted majority vote on its outcome from all models, the weights being according to Table~\ref{table:CostBoost}.
\end{enumerate}
\begin{remark}
It is not immediately clear how record-dependent costs can be used in CSAB.
\end{remark}
All these have been theoretically analysed in  \cite{NeedCSAda}  together with Naive CS AdaBoost (Sect.~\ref{sssec:weighted}) and   AdaMEC (Sect.~\ref{sec:voting}) from different viewpoints, with the conclusion that only the latter two and AsymAB have solid foundations, while calibration improves performance.
\subsection{Post-Training Methods}\label{ssec:Post}
In post-training methods, misclassification costs influence the classification step. Thus, when not used to build hybrid models, they offer the advantage of not having to retrain models when costs change. Some of these methods are only applicable if the costs of the records to be predicted are known at the time of prediction, thus when this is not the case, the unknown costs need to be somehow estimated. 
\subsubsection{Direct Minimum Expected Cost Classification}\label{sssec:DMEC} Direct Minimum Expected Cost Classification (DMECC)
bases the final decision of a classifier producing probabilities on a threshold $T \in \{T_{cs},T_{thr}\} $ (see Sect.~\ref{sec:csdecisions}).
	\begin{enumerate}
	    \item Build a model producing probabilities.
		\item A record's outcome is obtained according to $P_+ > T$.
	\end{enumerate}
One possibility is to apply DMECC to an ensemble producing probabilities, as done in \textit{Calibrated AdaBoost} \cite{CSAdaProb}, where AdaBoost probabilities are obtained using Platt Scaling (see Sect.~\ref{sec:AdaB}) and $T=T_{cs}$. 

Another possibility we have identified is to use DMECC to obtain a cost-sensitive outcome (instead of the default cost-insensitive one) from the base classifiers in any ensemble, when these are capable of producing probabilities, leading us to propose \textit{DMECC-Ensemble} and \textit{DMECC-AdaBoost}, which do so respectively in Bagging and AdaBoost.
\begin{remark}
If $T_{cs}$ is constant for all records then DMECC-Ensemble and DMECC-AdaBoost should be equivalent to CS-SampleEnsemble and CS-SampleBoost respectively (excluding CPR-sampling), though relying on probabilities instead of data-sampling. This follows from the equivalence of DMECC with threshold $T_{cs}$ and CS-sampling shown in \cite{foundations} and discussed in Sect.~\ref{sec:cssampling}.
\end{remark}
\begin{remark}\label{remark:dmecc}
DMECC with threshold $T_{cs}$ is only applicable if the costs of the records to be predicted are known at the time of prediction (as $T_{cs}$ depends on them). 
\end{remark}
\subsubsection{MetaCost}
As originally proposed, MetaCost \cite{metacost}  relabels the training set using the predictions obtained by Bagging with DMECC and re-uses it to train a single classifier. We generalise this concept to use the predictions of any cost-sensitive classifier in the direction of \cite{EvalMetaCost} where AdaMEC is used (see Sect.~\ref{sec:voting} below) and CSB0.
	\begin{enumerate}
        	\item \label{mc:relabelling} Replace each training record's outcome by the one obtained from a cost-sensitive model.
		\item Build a single (cost-insensitive) model using the relabelled records. 
		\item A record's outcome is its outcome from the new model.
	\end{enumerate}
\begin{remark}MetaCost reduces cost-sensitive ensemble models to single models, which are typically more explainable but less capable of capturing all the data characteristics. As observed in \cite{EvalMetaCost}, these single models often perform worse than the ensemble models.
\end{remark}
\subsubsection{CS Ensemble Voting}\label{sec:voting}
Costs can also be taken into account in an ensemble during weighted majority voting.
\paragraph*{\textbf{Cost-Sensitive Weights for Model Votes}} 
In certain AdaBoost variants, such as Naive CS AdaBoost, $\epsilon$ is calculated on cost-based record weights and hence results in a cost-sensitive $\alpha$, which serves as the weight of the model's vote in weighted majority voting.  
We observe that it is in fact straightforward to mimic this for any ensemble as follows:
\begin{enumerate}
\item Assign to each positive and negative record in the training set weights as in (\ref{eq:posnegweight}).
\item For each model in the ensemble compute $\alpha=f(\epsilon)$, where $f$ is some function and $\epsilon$ is the sum of weights of all misclassified records from the training or a validation set.
\end{enumerate}
Possibilities for the function $f$ include
\begin{equation}\label{eq:csweigts}
f(\epsilon)=\ln\left( (1-\epsilon)/\epsilon\right), \ \  f(\epsilon)=1-\epsilon, \ \  f(\epsilon)=e^{(1-\epsilon)/\epsilon} \text{ and } f(\epsilon)=\left((1-\epsilon)/\epsilon\right)^2 \enspace,
\end{equation}
 the latter two providing a right-skewed distribution.
\paragraph*{\textbf{MEC-Voting}} By MEC-Voting we shall refer to the generalisation to 
any ensemble of  
the idea behind \textit{AdaBoost with minimum expected cost criterion} \cite{CSB,EvalMetaCost}, or \textit{AdaMEC} as coined
 in \cite{CSAdaProb}, which is AdaBoost with modified Step~\ref{item:adaboostfinal}:
\begin{enumerate}
\item[6.] A record's final outcome is the weighted majority vote on its outcome from all models,  the weights being the product of $\alpha$ and the misclassification cost associated with the outcome.
\end{enumerate}  
\paragraph*{\textbf{Majority Threshold Adjustment (MTA)}} The outcome of weighted majority voting is positive if the sum of positive votes is greater than 0.5. Alternative cost-sensitive \textit{majority thresholds} that can be used are $T_{cs}$ (Sect.~\ref{sec:csdecisions}) and $T_{mthr}$ which we define as the one that yields the least costs on a validation set as done in Thresholding described in Sect.~\ref{sec:csdecisions}.
\begin{theorem}\label{thm:mec-mtaEquiv}
MEC-Voting and MTA using $T_{cs}$ are equivalent.
\end{theorem}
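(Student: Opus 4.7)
The plan is to show that both rules reduce to the same linear inequality in the weighted positive- and negative-vote masses. Fix a record, and for an ensemble of $M$ models with weights $\alpha_1,\ldots,\alpha_M$ and outcomes $h_1,\ldots,h_M \in \{0,1\}$, abbreviate $W_+ = \sum_{i:\,h_i = 1}\alpha_i$ and $W_- = \sum_{i:\,h_i=0}\alpha_i$, so that the total weight is $W = W_+ + W_-$. The theorem is a statement at the decision level, so everything should boil down to comparing two inequalities in $W_+, W_-$.

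First I would unpack MEC-Voting as defined in Sect.~\ref{sec:voting}: a model that outputs the positive label contributes weight $\alpha_i \cdot C_{FN}$ to the positive side (since $C_{FN}$ is the cost associated with that outcome being wrong), and a model that outputs the negative label contributes $\alpha_i \cdot C_{FP}$ to the negative side. The weighted majority rule therefore predicts positive iff $C_{FN} W_+ > C_{FP} W_-$. Next I would unpack MTA with $T_{cs}$: the weighted positive-vote fraction is $W_+/W$, and the rule predicts positive iff $W_+/W > T_{cs} = C_{FP}/(C_{FP}+C_{FN})$ by (\ref{eq:SCST}).

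The rest is a one-line algebraic rearrangement: cross-multiply MTA's inequality by the positive quantity $W(C_{FP}+C_{FN})$, expand, substitute $W = W_+ + W_-$, and cancel the $C_{FP}W_+$ term that appears on both sides; what remains is exactly the MEC-Voting inequality $C_{FN}W_+ > C_{FP}W_-$. The two criteria are therefore logically equivalent, and since the argument is pointwise in the record the same derivation extends without change to the record-dependent setting (replace $C_{FN}, C_{FP}$ by $C^i_{FN}, C^i_{FP}$ throughout) and to uniformly weighted ensembles such as Bagging and Random Forest (take $\alpha_i \equiv 1$).

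The main place to be careful, rather than a real obstacle, is the interpretation step: one must fix the convention that in MEC-Voting the ``cost associated with the outcome'' means $C_{FN}$ for a positive vote and $C_{FP}$ for a negative vote (matching the expected-cost derivation in Sect.~\ref{sec:csdecisions}), and one should note that the boundary case of exact equality is handled identically by both rules under any consistent tie-breaking convention. Once this is pinned down the equivalence is immediate.
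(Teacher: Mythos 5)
Your proof is correct and takes essentially the same route as the paper's: both arguments reduce MEC-Voting and MTA to the single inequality $C_{FN}W_+ > C_{FP}W_-$, the paper by expressing the MEC-weighted positive-vote fraction $S_1$ as a function of the unweighted fraction $S_2$ and translating $S_1>0.5$ into a threshold on $S_2$, and you by cross-multiplying the two decision rules directly, which also sidesteps the $S_1=0$ special case the paper treats separately. The convention you flag (a positive vote weighted by $C_{FN}$, a negative vote by $C_{FP}$) is exactly the one the paper's algebra uses (its manipulation pulls out the factor $C_{FP}/C_{FN}$ on the negative-over-positive ratio), and your final threshold $C_{FP}/(C_{FP}+C_{FN})$ is the one consistent with the definition of $T_{cs}$ in (\ref{eq:SCST}), whereas the paper's last displayed inequality writes $C_{FN}/(C_{FN}+C_{FP})$, an apparent swap of the two costs.
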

\begin{proof}
The weighted sums of positive votes with and without  MEC-Voting in an ensemble of $m$ models are respectively $S_1= \frac{\sum_{i:h_i=1}{\alpha_i C_{h_i}}}{ \sum_{i=1}^m{\alpha_i C_{h_i}}}$ and $S_2=\frac{\sum_{i:h_i=1}{\alpha_i}}{ \sum_{i=1}^m{\alpha_i }}$, where 
$C_{h_i}\in \{C_{FN},C_{FP}\}$ is the (non-zero) misclassification cost associated with the record's outcome $h_i$ from model $i$. 

If $S_1=0$ then $S_2=0$ and the theorem holds trivially. Otherwise, $S_1$ can be expressed in terms of $S_2$:
$$\begin{array}{*{20}{l}}
S_1&=&\frac{\sum_{i:h_i=1}{\alpha_i C_{h_i}}}{ \sum_{i:h_i=1}{\alpha_i C_{h_i}}+\sum_{i:h_i=0}{\alpha_i C_{h_i}}}
   &=&\frac{1}{1+\frac{\sum_{i:h_i=0}{\alpha_i C_{h_i}}}{\sum_{i:h_i=1}{\alpha_i C_{h_i}}}}
   &=&\frac{1}{1+\frac{C_{FP}}{C_{FN}}\left( \frac{\sum_{i:h_i=0}{\alpha_i}}{\sum_{i:h_i=1}{\alpha_i}} + 1 - 1 \right) }\\
   &=&\frac{1}{1+\frac{C_{FP}}{C_{FN}}\left( \frac{\sum_{i=1}^m{\alpha_i}}{\sum_{i:h_i=1}{\alpha_i}} - 1 \right) }
   &=&\frac{1}{1+\frac{C_{FP}}{C_{FN}}\left( \frac{1}{S_2} - 1 \right) } \enspace.\\   
\end{array}$$
Solving this equality for $S_2$ and using the fact that a record's outcome is positive if $S_1>0.5$ and negative otherwise, we obtain $S_2 > \frac{C_{FN}}{C_{FN}+C_{FP}}$, which is MTA using $T_{cs}$ as required.\end{proof}
\begin{remark} 
The equivalence of Theorem~\ref{thm:mec-mtaEquiv} was shown specifically for AdaMEC in \cite{NeedCSAda}.
\end{remark}
\begin{remark}\label{remark:mec&MT}
Both MEC-Voting and MTA using $T_{cs}$ are only applicable if the costs of the records to be predicted are known at the time of prediction.
\end{remark}
\begin{table}[!t]
\caption{The basic ensembles and a (non-exhaustive) list of pre- and during-training methods derived from our framework (independent of base classifier), with abbreviations. Novel ones 
 identified in this work are indicated by an asterisk *. The DMECC-Ensemble prefix \textit{dm-} is applicable to all non-AdaBoost ensembles in the list.}
\label{table:algoverview}
\centering
\begin{tabular}{cclccl}
\noalign{\bigskip}
\hline\noalign{\smallskip}
$\#$&Abbrv.&Name&$\#$&Abbrv.&Name\\
\noalign{\smallskip}\hline\noalign{\smallskip}
1.&bg&Bagging (Bg)          			&19.&upbg&Under-preSampleEnsemble - Bg *\\
2.&rf&Random Forests (RF)            &20.&cprpbg&CPR-preSampleEnsemble - Bg *\\       
3.&rdf&Random Decision Forests (RDF) &21.&opbg&Over-preSampleEnsemble - Bg *\\
4.&wbg&weightedEnsemble - Bg  *      &22.&uprf&Under-preSampleEnsemble - RF *\\
5.&wrf&weightedEnsemble - RF         &23.&cprprf&CPR-preSampleEnsemble - RF *\\
6.&wrdf&weightedEnsemble - RDF  *    &24.&oprf&Over-preSampleEnsemble - RF *\\
7.&ab&AdaBoost - AB    				&25.&uprdf&Under-preSampleEnsemble - RDF *\\
8.&ncsab&Naive CS AB         			&26.&cprprdf&CPR-preSampleEnsemble - RDF *\\
9.&ac\textit{i}&AdaC\textit{i}, $i \in \{1,2,3\}$     &27.&oprdf&Over-preSampleEnsemble - RDF *\\ 
10.&acost&AdaCost    				&28.&ubg&Under-SampleEnsemble - Bg \\
11.&aub&AdaUBoost                   	&29.&cprbg&CPR-SampleEnsemble - Bg\\
12.&csa&CSAB                      	&30.&obg&Over-SampleEnsemble - Bg\\
13.&csb\textit{i}&CSB\textit{i}, $i \in \{0,1,2\}$    &31.&urf&Under-SampleEnsemble - RF *\\     
14.&asb&Asymmetric AB                   				&32.&cprrf&CPR-SampleEnsemble - RF *\\
15.&usb&Under-SampleBoost                  		    &33.&orf&Over-SampleEnsemble - RF *\\
16.&cprb&CPR-SampleBoost *                       	    &34.&urdf&Under-SampleEnsemble - RDF *\\
17.&dab  &DMECC AB*                         			&35.&cprrdf&CPR-SampleEnsemble - RDF *\\
18.&dm-&DMECC-Ensemble*          						&36.&ordf&Over-SampleEnsemble - RDF *\\                                                                   
\noalign{\smallskip}\hline
\end{tabular}
\end{table}
\subsection{Hybrid Methods}
Table~\ref{table:modeloverview} gives an overview of how post-training methods can be combined with pre- or during training methods to yield hybrid methods.
\begin{table}[!t]        
\caption{Overview of the post-training methods combinable with each type of pre- and during-training method 
 according to the probability calibration used. There are four main components, namely DMECC, type of the ensemble's output, MTA and MetaCost, and their combination yields different hybrid models. Considering all combinations is a novelty of this work, and so are the two components indicated by an asterisk *.   
Avg(pr) denotes the average probability of the models in the ensemble, and wtMaj(cls) the class predicted by their weighted majority, considering both equal and cost-sensitive weights $\alpha=f(\epsilon)$, the latter given by all functions \textit{f} specified in (\ref{eq:csweigts}).} 
\label{table:modeloverview} 
\centering
\begin{tabular}{*{8}{c}} 
\noalign{\bigskip}
\hline\noalign{\smallskip}
Method                   &Probability                                  &\multicolumn{2}{c}{Ensemble's Output}& \multicolumn{2}{c}{DMECC}         &MTA*              &Meta\\ 
Type                     &Calibration            &Avg(pr)    &wtMaj(cls)*   & $T_{cs}$ &  $T_{thr}$                   &
																																							   &Cost\\
\noalign{\smallskip}\hline\noalign{\smallskip}
                         &None               &$\times$    &\checkmark    &$\times$   &$\times$    &\checkmark &\checkmark\\
AdaBoost \&                    &Logistic Correction&$\times$    &\checkmark    &\checkmark        &\checkmark  &\checkmark &\checkmark\\
variants                 &Platt Scaling      &$\times$    &\checkmark    &\checkmark        &\checkmark   &\checkmark &\checkmark\\
 &Isotonic Regression     &$\times$    &\checkmark    &\checkmark        &\checkmark   &\checkmark &\checkmark\\
\vspace{2mm}\\

\begin{tabular}{c}
CS-Weighted\\ensemble
\end{tabular}            &None               &\checkmark     &\checkmark &\checkmark    &\checkmark  &\checkmark &\checkmark
\vspace{2mm}\\

                         &None               &\checkmark     &\checkmark &\checkmark   &\checkmark  &\checkmark &\checkmark\\
Others                   &Laplace Smoothing &\checkmark     &\checkmark &$\times$   &\checkmark &\checkmark &\checkmark\\
                         &m-estimation       &\checkmark     &\checkmark &$\times$   &\checkmark  &\checkmark &\checkmark\\
                         &curtailment      &\checkmark     &\checkmark &$\times$   &\checkmark  &\checkmark &\checkmark\\
\noalign{\smallskip}\hline
\end{tabular}
\end{table}
\section{Towards Determining the State-of-the-Art in Cost-Sensitive Learning}\label{sec:experiments}
A natural question to ask is which, if any, of the described framework components can be considered as state-of-the-art.  Obtaining an indication on this requires a rigorous experimental comparison over a range of datasets, often referred to as \textit{benchmarking}. Such a benchmarking would be most useful if it considers sufficiently many publicly available datasets in order to allow reproducibility and the updating of the benchmarking via the inclusion of newly proposed methods. As already mentioned in the Introduction (Sect.~\ref{sec:RelatedWork}), there are two main uses of cost-sensitive learning, which should be considered separately.

 The first use is for treating class-imbalance alone,  
 in which case the misclassification costs do not necessarily have to be derived from the dataset or application domain and can be randomly assigned. Typically, different pairs of constant (class-dependent) costs are tried out in search for the one that gives the best results according to the metric of choice, which should be suitable for imbalanced datasets. A benchmarking can therefore be performed on a selection of the many imbalanced datasets already publicly available, and different sub-cases can depend on the level of imbalance. Although such a benchmarking will give an indication on the framework components that are best suited for treating class imbalance, in order to provide a complete picture it needs to part of a more general benchmarking that includes cost-insensitive methods specifically aiming at treating imbalance (see for example \cite{surGalar} for an overview) as well. 
 
 The second use is for minimising misclassification costs that are derived from the application domain or dataset, irrespective of the level of imbalance. Typically, the evaluation measure depends on these costs, often simply being their sum. This, however, might not be sufficient in certain cases, which include fraud detection and direct marketing, and an additional evaluation measure might be necessary. 
For instance, in fraud detection we are in practice interested in models that do not disrupt the operation of the business, thus our attention should be restricted to models that not only achieve the lowest costs, but also a realistic False Positive Rate (FPR). In the case of credit cards in particular, a FPR of at most $3\%$ should be within the capacity of investigating agents. 
In direct marketing scenarios, contacting potential responders to a request (such as for making a donation or a purchase) may incur costs (such as for postage). Thus, the application of a model assumes that a budget that provides the capacity to contact all those the model predicts as responders is readily available, which might not always be the case. For this reason it would be more appropriate to also look at the \textit{return on investment} (ROI), given as the net profit over expenditure. These two examples suggest that a benchmarking should probably be done per application domain, something not unusual (see for instance  \cite{lessbench} for a benchmark in the domain of credit scoring, albeit not focusing on cost-sensitive methods).

The main obstacle preventing such a benchmarking is the absence of sufficiently many publicly available datasets in general, let alone per application domain. The alternative of using publicly available datasets having no attribute from which misclassification costs can be derived, and assigning random costs to them (as done for just treating class imbalance we discussed above), might give misleading results in the absence of a clear business case. Moreover, constant costs are quite rare in domains such as direct marketing and fraud detection, and assigning random record-dependent costs is a difficult task, mainly because the distribution from they should be taken is unknown. 
We hope that practitioners will receive this as an open call to make more datasets publicly available in order to facilitate advances in the field, from which they can benefit as well. 

For the interested reader, in Appendix~\ref{appendix}, we provide three examples of cost-minimisation applications  of cost-sensitive learning using publicly available datasets.
\section{Conclusions}\label{sec:conclusions}
In this paper we have described and categorised available cost-sensitive methods with respect to misclassification costs by means of a unifying framework that also allowed us to identify new combinations and extend ideas across classifiers. This was our main contribution, which clarifies the picture and should aid further developments in the domain and serve as a baseline to which newly proposed methods should be compared.
\subsection{Future Work} \label{sec:FutureWork}
As our work has identified, some possibilities for new cost-sensitive ensemble methods can arise by developing a novel approach in any of the following domains: (a) cost-sensitive base classifiers such as decision trees with cost-sensitive node splitting and pruning, (b) cost-sensitive sampling, and using costs to (c) specify record weights, (d) update weights in AdaBoost variants, and (e) specify classifier weights for ensemble voting.

Worth exploring are how \textit{Ensemble Pruning} \cite{EnsPrun} and \textit{Stacking} \cite{stacking} (which instead of using the outputs of all the ensemble members for voting or averaging, respectively first choose a subset of them, or use them to train a second model) can be made cost-sensitive for inclusion in the post-training methods. It would also be interesting to investigate whether \textit{Gradient Boosting} \cite{gbm}, another representative of the boosting principle, and its popular variant \textit{XGBoost} \cite{xgboost} can have cost-sensitive variants, in particular with record-dependent costs, apart from being combined with post-training methods.

Another avenue for future research is to examine cost-sensitivity with respect to other types of costs as mentioned in \cite{CostTypes}, particularly costs of attribute acquisition and costs of obtaining missing values in the data \cite{ICET,DTMC}, that are important in many real world applications.
\section*{Acknowledgements} This work was supported by Secur'IT, the platform dedicated to information security launched by Innoviris, the Brussels Region Research funding agency. More info on www.securIT.brussels.
The authors would like to thank some of the anonymous reviewers of earlier versions of this work for their useful recommendations.
\bibliographystyle{plain}
\bibliography{CSLearning} 
\newpage
\appendix
\section{Example Applications}\label{appendix}
Here we briefly describe three examples of how cost-sensitive learning can be applied in real world scenarios where the main task is cost-minimisation. They are all based on publicly available datasets.	
\subsection{Datasets}\label{ssec:datasets}
\textit{Direct Marketing.}  The first dataset is on a donations to charity campaign in the United States and is available on-line as part of the 1998 KDD cup\footnote{\url{https://kdd.ics.uci.edu/databases/kddcup98/kddcup98.html}}.
The dataset contains 483 attributes related to people contacted by mail and asked to donate to a charity, including the donation amount (where 0 indicates no donation). It is a highly imbalanced set whose positive records are 5.1\% of it.

\textit{Churn Prediction.} The second dataset is on churn prediction in the Telecom industry and is available on-line at Kaggle\footnote{\url{https://www.kaggle.com/blastchar/telco-customer-churn}}. 
It comprises 7043 customer records of 21 attributes, including whether they churned or not. 
It is a moderately imbalanced set whose positive records  constitute 26.54\% of it.

\textit{Credit Card Fraud Detection.}  
 The third and final dataset is on credit card fraud detection and is available on-line at Kaggle\footnote{\url{https://www.kaggle.com/mlg-ulb/creditcardfraud/}}. 
In total, two days of credit card transactions are given  (284807 records of 31 attributes). For obvious security and user privacy reasons, all the attributes names and values have been scrambled and transformed into numerical ones by the owners, apart from the transaction's amount, the time (in seconds) elapsed between the transaction and the first transaction in the dataset, and whether the transaction was fraudulent or not. It is an extremely imbalanced set whose positive records are a mere 0.17\% of it.
\begin{table}[!t]
\caption{Overview of the 3 datasets.
 For the first one, within brackets is the number obtained after attribute engineering.}
\label{table:dataoverview}           
\centering
\begin{small}
\begin{tabular}{*{9}{c}}
\noalign{\bigskip}
\hline\noalign{\smallskip}
$\#$&Domain&Set&\multicolumn{2}{c}{Size}&+ves&\multicolumn{2}{c}{$\#$Attributes}&Training \\
&&&$\#$&$\%$&($\%$)&Total &Used &($C_{FP}, C_{FN}$)\\
\noalign{\smallskip}\hline\noalign{\smallskip}
1.&Direct Marketing        & Total     &191779&100  & 5.1 &483&31&(0.68, 14.45),\\
  &     &Training   &65000 &33.9 & 5.1&&(8)  &(0.68, $amt-0.68$)\\
  &              &Validation &30412 &15.9 &5  &    &    &\\
  &              &Test       &96367 &50.2 &5   &    &    & 
\vspace{3mm}\\

2.&Churn  Prediction     & Total     &7043&100& 26.54 &21&  20 &(1, 6), \\
  &  &Training   &4508&64& 26.77  & &     & (122.35, 893.09)\\
  &            &Validation &1127&16&26.44   &    &       &\\
  &            &Test       &1408&20&25.85   &    &       &
  \vspace{3mm}\\

3.&Credit Card   &Total     &284807&100&0.17 &31&31 &(1, \textit{amt}), (1, 119.68),\\
  & Fraud Detection       &Training  &110852&38.9& 0.18&&  & (2, \textit{amt}), (2, 119.68),\\
  &            &Validation &33934&11.9&0.18  &    &    &(5, \textit{amt}), (5, 119.68)\\
  &              &Test       &140021&49.2&0.15 &    &    & \\
  
\noalign{\smallskip}\hline
\end{tabular}
\end{small}
\end{table} 
\subsection{Data Preparation}\label{ssec:dataprep} 
The datasets were segmented into three disjoint subsets respecting the global proportion of positive records: a training set used for building models, a validation set used for determining parameters such as $T_{thr}$ and $\epsilon$ used for specifying model weights, and a test set used for model evaluation. Table~\ref{table:dataoverview} provides the details.

\textit{Direct Marketing:} 
The data is provided in two parts. Following the competition guidelines, we used the first part for training and validation and the second one for evaluation. As the amount of attributes the dataset contains is relatively large, we followed the selection and engineering approach of \cite{KDD98DataPrep}, the competition winners,  to reduce their number significantly in order speed up model training. 

\textit{Churn Prediction:} 
There was no restriction on the splitting. The redundant customer ID attribute was dropped.  
Although the dataset does not have predefined costs, it does have an attribute from which they can be derived, namely \textit{Monthly Costs}. Assuming a (realistic) campaign where customers contacted are offered 2 months for free if they renew their  telephone subscription, $C^i_{FP}$ can be set to two times \textit{amt}, the amount charged monthly, and $C^i_{FN}$ to twelve times \textit{amt} (equivalent to a whole year of lost profits from  the churning customers).

\textit{Credit Card Fraud Detection:} 
Credit card fraud models are typically built on past transactions in order to detect fraud amongst new transactions. With this in mind, we used the first day for training and validation and the other for evaluation. All attributes were used. 
\subsection{Misclassification Costs}
Models were trained using both the class and record-dependent dataset-specific cost pairs $(C_{FP},C_{FN})$ that are mentioned below.  
However, $T_{cs}$, $T_{thr}$ and $\epsilon$ used for  model weights were always computed using the record-dependent costs.
 
\textit{Direct Marketing:} (0.68, 14.45) and (0.68, $amt-0.68$), where $0.68$ is the cost of contacting a person, $14.45$  is the average donation amount of the donors in the training dataset minus $0.68$, and \textit{amt} is the actual record-dependent donation's amount for positive records. To be able to be able to use MEC-Voting and MTA and DMECC with $T_{cs}$ (see Remarks~\ref{remark:dmecc} and~\ref{remark:mec&MT}, we estimated \textit{amt} for negative records as the would-have-been donation if the record was in fact positive via a linear regression, again following \cite{KDD98DataPrep}. 
 
\textit{Churn Prediction:} (1, 6), where 1 to 6 is the ratio of $C_{FP}^i$ over $C_{FN}^i$ as derived from the campaign we set up in Sect.~\ref{ssec:dataprep}. 

\textit{Credit Card Fraud Detection:} (1, \textit{amt}), (1, 119.68), (2, \textit{amt}), (2, 119.68), (5, \textit{amt}) and (5, 119.68), where  $119.68$  is the average amount of fraudulent transactions in the training data, \textit{amt} is the actual record-dependent transaction's amount, and 1, 2 and 5 are the (class-dependent) overhead costs we consider in the absence of a clear idea of what they are.
\subsection{Evaluation}\label{sec:eval} We evaluate models based on their cost saving performance using the record-dependent misclassification costs we defined per dataset. For reference, we also include \textit{True} and \textit{False positive rates} (TPR and FPR), and \textit{area under the ROC curve} (AUC \cite{provost2001robust}). These are standard accuracy-based measures, and while the former two depend on the decision threshold used, the latter evaluates performance across all possible thresholds.

\textit{Direct Marketing:} The total net profit is computed  as the sum of donations by people contacted minus a cost of 0.68 per person contacted. As discussed in Sect.~\ref{sec:experiments}, we also look at ROI. 
For example, the trivial model of targeting everyone on the list yields a net profit of $10560.08$, but only if the budget to cover the associated $65529.56$ in postage costs is available, leading to a ROI of $0.16$. Next, consider a naive CS AdaBoost model with net profit $9282.59$ that has ROI $0.59$. Even though the profit is about $1300$ less, the ROI is almost $4$ times larger, meaning that only one fourth of the trivial model's budget is needed.

\textit{Churn Prediction:} We calculated Cost\%, the percentage by which costs are reduced due to the campaign as compared to having no campaign and losing all churners, with the assumption that all contacted churners accept the offered 2 months of free subscription and are retained.

\textit{Credit Card Fraud Detection:} We used \textit{amt} as $C^i_{FN}$ and computed the percentage of the sum of the amounts of all fraudulent transactions that is saved by a model by correctly detecting fraud, which we denote by TotF$\%$. As discussed in Sect.~\ref{sec:experiments}, we restrict our attention those that achieve a realistic FPR of at most $3\%$.
\subsection{Experiments}
We implemented our framework in the software package R \cite{R}, using a combination of existing and our own implementations. All ensembles we built consist of 100 rpart trees (the implementation of CART in R),  
 for Random Forests we allowed the growing of deep trees, whereas decision stumps (trees of depth 2) were grown for AdaBoost and variants.
All experiments were run three times using a different splitting of the dataset into training and validation sets (see Sect.~\ref{ssec:dataprep}), and the results were averaged. 
 Tables~\ref{table:KDDoverview}, \ref{table:Churnoverview} and \ref{table:FDoverview} show the best-performing models obtained from our experiments.
\begin{table*}[!t]
\caption{The 10 models with the highest profit in the Direct Marketing Dataset, using the abbreviations from Tables~\ref{table:algoverview} and \ref{table:modeloverview}. We observe that their ranking according to the ROI column would be different. All DMECC-ensembles happen to use $T_{cs}$, and  the best performing cost-insensitive model (*) is included for reference.}
\label{table:KDDoverview}
\centering
\begin{scriptsize}
\begin{tabular}{*{20}{c}} 
\noalign{\bigskip}
\hline\noalign{\smallskip} 
\#&Method&Training&Probability&Ensemble's&DMECC&$\alpha$&Profit&ROI&TPR(\%)& FPR(\%)&AUC \\
   &    & $(C_{FP},C_{FN})$             &Calibration&Output    &/ MTA   &    &       &     &       &        &\\
\noalign{\smallskip}\hline\noalign{\smallskip}
1.&dm-rdf&          &m-estim.    &wtMaj(cls)&         &$\left(\frac{1-\epsilon}{\epsilon}\right)^2$
                                                                      &  \textbf{13769.75}  & 0.28 &69.37  &  75.40  & 0.458\\                                                                                                      
2.&dm-rf&           &Laplace&wtMaj(cls)&&$e^{\frac{1-\epsilon}{\epsilon}}$   
                                                                      &  13729.00  &  0.37 &59.97  &  56.93    & 0.519\\ 
3.&dm-rf &         &&wtMaj(cls)&&$e^{\frac{1-\epsilon}{\epsilon}}$  
                                                                      &  13700.18  & 0.37 &59.03  &  55.97    & 0.519\\                                                                                                                                            
4.&dm-rf  &         &&wtMaj(cls)&&$\left(\frac{1-\epsilon}{\epsilon}\right)^2$  
                                                                      &  13698.68  & \textbf{0.38} &58.80  &  \textbf{55.67}  & 0.520\\ 
5.&dm-rf &         &Laplace&wtMaj(cls)&&$\left(\frac{1-\epsilon}{\epsilon}\right)^2$    
                                                                      &  13683.40  & 0.37 &59.73  &  56.63   & 0.519\\ 
6.&dm-rf  &         &Laplace&wtMaj(cls)&&$1-\epsilon$
                                                                      &  13679.26  & 0.37 &60.13  &  57.20  & 0.519\\ 
7.&rdf &      &   &Avg(pr)&$T_{cs}$& 
                                                                      &  13674.27  & 0.28 &69.30  &  75.03   & \textbf{0.603}\\ 
8.&dm-rdf &        &m-estim.&wtMaj(cls)&&$e^{\frac{1-\epsilon}{\epsilon}}$
                                                                      &  13662.86  & 0.28 &\textbf{69.70}  &  76.03   & 0.458\\ 
9.&dm-rf  &         &&wtMaj(cls)&&$1-\epsilon$
                                                                      &  13662.41  & 0.37 &59.17  &  56.20   & 0.519\\                                                                       
10.&dm-rdf&&Laplace& wtMaj(cls)&&$\left(\frac{1-\epsilon}{\epsilon}\right)^2$& 13654.57& 0.30&63.40& 69.77          &0.458\\
  \noalign{\smallskip}\hline\noalign{\smallskip}                                                                    
*&rpart&&m-estim.&&&&                        94.60&    &0.20    &0.10        &0.594\\                                                                      
\noalign{\smallskip}\hline
\end{tabular}
\end{scriptsize}
\end{table*}
\begin{table*}[!t]
\caption{The 10 models with the highest Cost\% in the Churn Prediction Dataset, using the abbreviations from Tables~\ref{table:algoverview} and \ref{table:modeloverview}. 
 All DMECC-ensembles happen to use $T_{cs}$. The best performing cost-insensitive model (*) is included for reference.}
\label{table:Churnoverview}
\centering
\begin{scriptsize}
\begin{tabular}{*{20}{c}} 
\noalign{\bigskip} 
\hline\noalign{\smallskip} 
\#&Method&Training&Probability&Ensemble's&DMECC&$\alpha$&Cost\%&TPR(\%)& FPR(\%)&AUC \\
   &    & $(C_{FP},C_{FN})$             &Calibration&Output    &/ MTA      &       &     &       &        &\\
\noalign{\smallskip}\hline\noalign{\smallskip}
1.&acost&$(1,6)$&Platt&wtMaj(cls))&$T_{cs}$&$\log(\frac{1-\epsilon}{\epsilon})$				 	&\textbf{70.84} &90.70 &44.00  &0.834\\
2.&asb&$(1,6)$&&wtMaj(cls)&&$\log(\frac{1-\epsilon}{\epsilon})$                     				 	&70.84 &90.40 &44.40  &\textbf{0.839}\\
3.&asb&$(1,6)$&Logistic&wtMaj(cls)&&$\log(\frac{1-\epsilon}{\epsilon})$            				 	&70.84 &90.40 &44.40  &0.839\\
4.&ab&&Logistic&wtMaj(cls)&$T_{cs}$&$\log(\frac{1-\epsilon}{\epsilon})$         					&70.77 &90.10 &44.20  &0.838\\

5.&dm-bg&&&wtMaj(cls)&&$\left( \frac{1-\epsilon}{\epsilon} \right) ^2$               &70.61& 91.80& 47.63&      0.811\\
6.&dm-bg&&Laplace&wtMaj(cls)&&$\left( \frac{1-\epsilon}{\epsilon} \right) ^2$          &70.58& \textbf{92.00}& \textbf{48.20}&      0.811\\
7.&dm-bg&&&wtMaj(cls)&&$e^{\frac{1-\epsilon}{\epsilon}}$              &70.57& \textbf{92.00}& \textbf{48.20}&       0.812\\

8.&asb&$(1,6)$&Platt&wtMaj(cls)&$T_{cs}$&$\log(\frac{1-\epsilon}{\epsilon})$ 				 	&70.57 &90.10 &44.40  &0.839\\
9.&ab&&Platt&wtMaj(cls)&$T_{cs}$&$\log(\frac{1-\epsilon}{\epsilon})$         				 	&70.55 &90.10 &44.60  &0.838\\

10.&dm-bg&&Laplace&wtMaj(cls)&&$1$&                     70.53& 91.80& 47.87&     0.810\\
\noalign{\smallskip}\hline\noalign{\smallskip}
*&ab&&Platt&wtMaj(cls)&&$\log(\frac{1-\epsilon}{\epsilon})$&51.04&53.00&10.60&0.838\\
\noalign{\smallskip}\hline
\end{tabular}
\end{scriptsize}
\end{table*}
\begin{table*}[!t]
\caption{The 5 models with the highest TotF\% and FPR respectively at most $1\%$, $2\%$ and $3\%$ in the Credit Card Fraud Detection Dataset, using the abbreviations from Tables~\ref{table:algoverview} and \ref{table:modeloverview}. 
 All DMECC-ensembles use $T_{cs}$ except one that uses $T_{thr}$. The best performing cost-insensitive model (*) is included for reference.}
\label{table:FDoverview}
\centering
\begin{scriptsize}
\begin{tabular}{*{20}{c}}
\noalign{\bigskip} 
\hline\noalign{\smallskip} 
\#&Method&Training&Probability&Ensemble's&DMECC&$\alpha$&TotF\%&FPR(\%)& TPR(\%)&AUC \\
   &    & $(C_{FP},C_{FN})$             &Calibration&Output    &/ MTA      &       &     &       &        &\\
\noalign{\smallskip}\hline\noalign{\smallskip}
1.&dm-upbg&$(5,119.31)$&&wtMaj(cls)&&$\log(\frac{1-\epsilon}{\epsilon})$  &                           \textbf{80.67}& 0.97& 75.83&   0.889\\
2.&dm-oprdf&$(5,119.31)$&m-estim.&wtMaj(cls)&&$\left( \frac{1-\epsilon}{\epsilon} \right) ^2$&       80.42& 0.87& 70.93&   0.891\\
3.&dm-oprdf&$(1,119.31)$&Laplace&wtMaj(cls)&&$\log(\frac{1-\epsilon}{\epsilon})$&                    79.86& \textbf{0.83}& 74.23&   0.889\\
4.&dm-cprbg&$(5,119.31)$&&wtMaj(cls)&&$\left( \frac{1-\epsilon}{\epsilon} \right) ^2$&                79.52& 0.97& 75.67&   0.884\\
5.&dm-cprbg&$(5,amt)$&m-estim.&wtMaj(cls)&$T_{cs}$&$\left( \frac{1-\epsilon}{\epsilon} \right) ^2$& 77.26& 0.87& \textbf{76.13}&   \textbf{0.959}\\ 
\noalign{\smallskip}\hline\noalign{\smallskip}                                                                             
1.&dm-uprdf&$(5,119.31)$&Laplace&wtMaj(cls)&&$1$&                                \textbf{88.69}& 1.83& 68.23&      \textbf{0.891}\\
2.&dm-cprrdf&$(5,119.31)$&Laplace&wtMaj(cls)&&$1$&                               88.20& 1.73& 68.40&      0.877\\
3.&dm-cprrdf&$(5,119.31)$&Laplace&wtMaj(cls)&&$\log(\frac{1-\epsilon}{\epsilon})$&88.15& 1.63& \textbf{68.90}&      0.880\\
4.&dm-cprrdf&$(5,119.31)$&&wtMaj(cls))&&$1$&                                     88.15& 1.63& 68.23&      0.886\\
5.&dm-cprrdf&$(5,119.31)$&&wtMaj(cls))&&$1-\epsilon$&                            88.15& \textbf{1.60}& 68.23&      0.887\\
\noalign{\smallskip}\hline\noalign{\smallskip}
1.&dm-oprdf&$(2,119.31)$&m-estim.&wtMaj(cls)&&$\left( \frac{1-\epsilon}{\epsilon} \right) ^2$&   \textbf{95.64}& \textbf{2.53}& \textbf{77.73}&  \textbf{0.899}\\
2.&dm-cprrdf&$(5,119.31)$&m-estim.&wtMaj(cls)&&$1$&                                              94.86& 2.70& 70.77&   0.859\\
3.&dm-cprrdf&$(5,119.31)$&m-estim.&wtMaj(cls)&&$1-\epsilon$&                                     94.86& 2.63& 70.77&   0.860\\
4.&dm-uprdf&$(5,119.31)$&m-estim.&wtMaj(cls)&&$1$&                                               94.52& 2.70& 70.13&   0.870\\
5.&dm-uprdf&$(5,119.31)$&m-estim.&wtMaj(cls)&&$1-\epsilon$&                                      94.52& 2.67& 70.13&   0.873\\
\noalign{\smallskip}\hline\noalign{\smallskip}
*&rpart&&&&&                              &48.34&0.10&72.00&0.903\\
\noalign{\smallskip}\hline
\end{tabular}
\end{scriptsize}
\end{table*}

\end{document}